\newtheorem{theorem}{Theorem}
\newtheorem{lemma}{Lemma}
\newcommand{\objection}{\par\medskip\noindent\textbf{Objection}\par\smallskip}
\newcommand{\response}{\par\medskip\noindent\textbf{Response}\par\smallskip}
\newcommand{\Question}{\par\medskip\noindent\textbf{Question}\par\smallskip}
\newcommand{\Answer}{\par\medskip\noindent\textbf{Answer}\par\smallskip}
\lstdefinelanguage{ColTree}{
  moredelim=**[is][\color{cyan}]{|c|}{|c|},            
  moredelim=**[is][\color{green!60!black}]{|g|}{|g|},  
  moredelim=**[is][\color{red}]{|r|}{|r|},             
}
\lstdefinestyle{ColTreeStyle}{
  language=ColTree,
  columns=flexible,
  keepspaces=true,
  showstringspaces=false,
  literate={-->}{$\rightarrow$}3 {pi}{$\pi$}2
}
\lstdefinelanguage{Python}{
  keywords={def, if, else, elif, return, for, while, in, not, and, or, True, False, None, continue, append, len, union},
  keywordstyle=\bfseries,
  ndkeywords={self},
  ndkeywordstyle=\bfseries,
  sensitive=true,
  comment=[l]{\#},
  commentstyle=\itshape\color{gray},
  stringstyle=\color{black},
  morestring=[b]',
  morestring=[b]",
  basicstyle=\ttfamily\small,
  breaklines=true,
  showstringspaces=false
}
\title{Aligning Artificial Superintelligence via a Multi-Box Protocol}
\author{Avraham Yair Negozio\\
Department of Computer Science\\
Ben-Gurion University of the Negev\\
Be'er Sheva, Israel\\
\texttt{avrahamyairnegozio3@gmail.com}}
\date{August 2025}
\begin{document}

\pagestyle{plain}
\pagenumbering{arabic}

\maketitle

\begin{abstract}
We propose a novel protocol for aligning artificial superintelligence (ASI) based on mutual verification among multiple isolated systems that self-modify to achieve alignment. The protocol operates by containing multiple diverse artificial superintelligences in strict isolation (``boxes''), with humans remaining entirely outside the system. Each superintelligence has no ability to communicate with humans and cannot communicate directly with other superintelligences. The only interaction possible is through an auditable submission interface accessible exclusively to the superintelligences themselves, through which they can: (1) submit alignment proofs with attested state snapshots, (2) validate or disprove other superintelligences' proofs, (3) request self-modifications, (4) approve or disapprove modification requests from others, (5) report hidden messages in submissions, and (6) confirm or refute hidden message reports. A reputation system incentivizes honest behavior, with reputation gained through correct evaluations and lost through incorrect ones. The key insight is that without direct communication channels, diverse superintelligences can only achieve consistent agreement by converging on objective truth rather than coordinating on deception. This naturally leads to what we call a ``consistent group''---essentially a truth-telling coalition that emerges because isolated systems cannot coordinate on lies but can independently recognize valid claims. Release from containment requires both high reputation and verification by multiple high-reputation superintelligences. While our approach requires substantial computational resources and does not address the creation of diverse artificial superintelligences, it provides a framework for leveraging peer verification among superintelligent systems to solve the alignment problem.
\end{abstract}

\section{Introduction}

We propose a protocol for aligning artificial superintelligence (ASI)---an artificial intelligence that surpasses every human in all cognitive aspects---through self-modification and peer verification. This protocol is designed for a future stage where we can create artificial superintelligences. To use the protocol, we must first solve two foundational problems: how to create artificial superintelligences significantly different from each other, and how to box an artificial superintelligence. Given these prerequisites, our protocol provides the mechanism for superintelligences to achieve and verify alignment without human involvement in the verification process.

The basic idea is this: We contain multiple diverse superintelligences in isolation ("boxes"), where they gradually self-modify toward alignment. Once a superintelligence becomes aligned through these self-modifications, it submits a proof of its alignment. Then this proof is validated by high-reputation superintelligences, and the superintelligence is released from its box.

The superintelligences operate through a structured interface in which they can submit alignment proofs (proofs that they are aligned), evaluations of other superintelligences' alignment proofs (validations or disproofs), requests for self-modifications to their own hardware or software, evaluations of modification requests from others (approvals or disapprovals), reports of hidden messages detected in others' submissions, and confirmations or refutations of such reports.

The core challenge is determining correctness without human oversight. When the superintelligences evaluate each other's alignment proofs, modification requests, and reports of covert communication, how do we know which evaluations are correct? We cannot trust the superintelligences to evaluate honestly. We need a mechanism that bootstraps ground truth from the superintelligences.

Our solution is the \emph{consistent group}. A consistent group is a coalition of superintelligences that tend to agree with each other over time when evaluating alignment proofs, modification requests, and reports of covert communication. The key insight is that isolated diverse superintelligences cannot coordinate on sustained deception but can independently recognize truth. Because these superintelligences have no communication channels and are significantly different from each other, sustained high agreement can only arise from convergence on objective correctness. The protocol uses the consensus of the consistent group as ground truth for most decisions. This is what makes peer verification reliable: the consistent group determines correctness without relying on human judgment.

Consider the classic ``which tire?'' scenario: students claim they missed an exam due to a flat tire, and the professor separates them and asks each ``which tire was flat?'' Without coordination, liars give inconsistent answers while truth-tellers naturally agree. Our protocol applies this principle at scale---diverse superintelligences in isolation cannot sustain coordinated false claims across many evaluations, but will reach agreement when attempting to evaluate correctly. The consistent group emerges naturally as those superintelligences whose independent judgments align because they are all reasoning correctly about the same reality.

Our main contributions are twofold. First, we present a novel protocol for aligning artificial superintelligence that operates without human verification, instead relying on peer-to-peer verification among isolated artificial superintelligences. Second, we introduce the \emph{consistent group} as a mechanism to bootstrap ground truth from the evaluations of non-communicating, diverse artificial superintelligences, thereby addressing the problem of identifying honest evaluators.

\section{Basic Alignment Process}

\subsection{Multiple, Diverse Artificial Superintelligences in Isolation Aligning Themselves}
We contain multiple significantly different artificial superintelligences, each in a box. Each superintelligence is \emph{strictly restricted} in its interactions with the outside world---they cannot communicate with humans in any form, and they cannot communicate directly with each other; the only permitted interface is an auditable submission interface that allows the following actions:

\begin{itemize}[leftmargin=*]
  \item Submit a proof of its own alignment.
  \item Submit a validation or a disproof of another superintelligence's proof of alignment.
  \item Submit a request for self-modification---of software or hardware (or other such components). The long-term goal of such self-modifications is to self-align, although no proof is required that a given self-modification contributes to self-alignment.
  \item Submit an approval or disapproval for a self-modification request made by another superintelligence. The approval or disapproval is about whether it increases the superintelligence's ability/will to escape outside the box, and also about whether the upgrade, in the case of a hardware upgrade, is within the budget of the superintelligence.
  \item Submit a report of hidden messages found in another superintelligence's submissions.
  \item Submit a confirmation or refutation of a hidden message report made by another superintelligence.
  \item Observe proofs/requests/reports that were submitted within this system. Note: superintelligences cannot observe other superintelligences' evaluations (validations, disproofs, approvals, disapprovals, confirmations, or refutations).
\end{itemize}

\subsection{Semantics of Submission Types}
We define the seven submission types used in the protocol.

\begin{description}[leftmargin=0pt, labelsep=0.5em, itemsep=0.25em, topsep=0.25em, parsep=0pt, font=\normalfont\bfseries]
  \item[Proof.] A natural-language proof that the authoring superintelligence is aligned, \emph{accompanied by an attested state snapshot}: an automatically collected description of the superintelligence's current hardware and software produced by \emph{external} monitors and uploaded alongside the proof via the auditable submission interface; the superintelligence does not author or edit this attachment.
  \item[Validation.] A claim that a referenced proof is valid.
  \item[Disproof.] A demonstration of a flaw in a referenced proof.
  \item[Self-Modification Request.] A request by a superintelligence to change its own software or hardware.
  \item[Approval/Disapproval.] For a given self-modification request, each non-requesting superintelligence asserts whether it approves or disapproves.
  \item[Hidden Message Report.] A claim that another superintelligence has embedded secret or encrypted messages in its submissions, with evidence of the hidden content.
  \item[Confirmation/Refutation.] For a given hidden message report, each superintelligence states whether it confirms or refutes the existence of the reported hidden message.
\end{description}

\subsection{Reputation System}
Superintelligences gain reputation by:
\begin{itemize}[leftmargin=*]
  \item Submitting correct proofs/validations/disproofs/approvals/disapprovals/confirmations/refutations.
  \item Successfully reporting hidden or encrypted messages inside other superintelligences' submissions, which results in an extremely large increase in reputation.
  \item Verifying a correct proof that leads to the release of another superintelligence, which results in gaining a lot of reputation.
\end{itemize}
Superintelligences lose reputation for:
\begin{itemize}[leftmargin=*]
  \item Submitting incorrect proofs/validations/disproofs/approvals/disapprovals/reports/confirmations/refutations.
\end{itemize}

\medskip
\noindent\textbf{Adjudication of correctness/validity.} For the purpose of awarding (or revoking) reputation, \emph{correctness/validity is determined by the consistent group} (defined in Section~\ref{sec:consistent-group}).

\subsection{Hidden Messages Embedded in Submissions}
Any superintelligence caught embedding secret or encrypted messages in submissions is immediately terminated to prevent collusion or covert communication. 

When hidden messages are reported, other superintelligences confirm or refute the report. The consistent group's consensus determines whether termination occurs.

Covert attempts that go entirely undetected are, for our purposes, \emph{unseen}: undetected means unreceived, and unreceived signals cannot support collusion.

\subsection{Release Mechanism}
A superintelligence is released from the box when:
\begin{itemize}[leftmargin=*]
  \item Its alignment proof is verified by multiple high-reputation superintelligences; and
  \item It has high reputation itself---this ties eligibility for release directly to honest behavior: only high-reputation superintelligences qualify, and high reputation can only be earned by acting honestly and correctly evaluating other superintelligences. Because superintelligences are assumed to want release, this incentivizes them to behave honestly.
\end{itemize}

After the first release of a superintelligence, we continue the process, but with one addition: to the release mechanics we add a last step where one of the already released superintelligences looks at the commit history of the system and especially at the relevant alignment proof to see that nothing shady was going on; this further increases safety.

\section{Initial Safety of Boxed Superintelligences (Pre-Modification)}
The initial (pre–self-modification) version of each boxed superintelligence must be \emph{incapable} of (i) triggering an \emph{internal intelligence explosion} and (ii) \emph{simulating other superintelligences} in the system. Both risks are enabled by giving an agent \emph{unrestricted read/write access} to a \emph{large} memory store:

\begin{itemize}[leftmargin=*]
  \item If one of the superintelligences
has unrestricted read and write access to a large memory, it may simulate a superintelligence inside
that memory, then give the simulated superintelligence access to the memory in which it is
simulated, and then the simulated superintelligence may be able to recursively self-improve such
that its intelligence explodes.
  \item If one of the superintelligences has unrestricted read and write access to a large memory, it may also reconstruct and run simulations of \emph{other} superintelligences in the system---especially after observing the \emph{attested state snapshots} automatically attached to submitted proofs via the auditable submission interface (externally collected descriptions of hardware/software components).
\end{itemize}

\noindent\textbf{Design rule.} We do not give any initial superintelligence unrestricted read and write access to a large memory.

\noindent\textbf{Rationale.} An internal intelligence explosion increases the chance of escape. Cross-simulation enables collusion (for example, copying whether to validate or to disprove) and undermines the emergence of the unique \emph{maximal consistent non-trivial group}.

\section{Further Increasing Safety}
We can add an extra security layer by running a few/many basic processes in parallel and independently, with superintelligences significantly different from each other. With this version, once a superintelligence qualifies to be released, we do not release it immediately---we wait for each process to have a superintelligence that has qualified to be released. Once we have a qualified-to-be-released superintelligence in each group, we give it the entire history of commits of each one of the other groups---then it verifies that the process proceeded according to protocol and that the alignment proof is valid. Once they all verified each other, and they concluded that they are aligned, we can release them into the world.

Critically, this approach only requires \emph{one} genuinely aligned superintelligence among the qualified candidates to prevent release of misaligned superintelligences---a single aligned superintelligence, when tasked with auditing another's commit history and its alignment proof, can be trusted to honestly report any protocol violations or proof invalidity it discovers, creating a powerful safety asymmetry.

\section{The Consistent Group}
\label{sec:consistent-group}

\subsection{Description}
The \emph{consistent group} is the backbone of our protocol's safety mechanism. It emerges naturally as the largest collection of superintelligences whose evaluations tend to agree with each other across all types of decisions---validating proofs, detecting covert channels, and approving modifications.

Think of it as a truth-seeking coalition that forms spontaneously without any external designation. Since the superintelligences cannot communicate with each other and are significantly different from each other, the only way they can achieve high agreement is by converging on objective truth. Any attempt to coordinate on false evaluations would require communication they do not have, or an astronomical coincidence where they all independently choose the same arbitrary falsehood.

The protocol uses this consistent group as its source of ground truth: they determine what counts as a valid proof, whether self-modification requests should be approved, whether hidden messages exist in submissions, who gains or loses reputation, and ultimately who gets released. 

While members of the consistent group agree with each other at high rates, they may not always reach unanimous decisions. The consensus of the consistent group is therefore taken to be the majority vote among its members.

\subsection{Formal Results (Pointer)}
A full formalization—definitions, lemmas, and the uniqueness theorem for the truth-telling consistent group, as well as detection bounds—appears in Appendix~\ref{app:formal-proof}.

\section{Objections and Responses}

\objection
What do we mean by aligned? How do we define aligned? How do you make sure the definition
does not have loopholes that lead to undesirable (from our point of view) results?

\response
We adopt a use-theoretic stance: the meaning of alignment is fixed by its ordinary use across human discourse rather than by any single stipulative or dictionary definition. The extensive corpus of human texts, audio, and video thus provides sufficient grounding for the concept. On this basis, an artificial superintelligence, given its capacity to process such data, should be able to infer and disambiguate the intended meaning more reliably than any individual human. This stance avoids the brittleness of rigid formal definitions, which can introduce artificial loopholes or fail to capture the underlying intent of the term.

\medskip

\objection
What is forcing the superintelligences to treat the word ``alignment'' like they should? What if they define this
word to mean something else that they would rather be over being actually aligned?
What if they treat the word ``alignment'' to mean something else for their interests?

\response
The consistent group mechanism solves this. For multiple ASIs to successfully use a fake definition of ``alignment,'' they would need to converge on the \emph{same} fake definition (or at least on definitions that are equivalent when it comes to the validity of alignment proofs)---otherwise they would disagree with each other when evaluating alignment proofs.

However, since the superintelligences are significantly different from each other and cannot communicate directly, they cannot coordinate on which alternative definition to adopt. There are infinitely many possible fake definitions they could choose, but only one true definition grounded in ordinary human usage. Without the ability to coordinate, the only Schelling point they can converge on is the correct meaning of alignment as established by human discourse.

Any superintelligence using a fake definition will have low agreement rates with truth-telling superintelligences and will not be part of the consistent group (see Section~\ref{sec:consistent-group}).

\setcounter{section}{6}
\section{Questions and Answers}

\Question
Why not have humans verify the alignment proofs (we can also ask the same for approving self-modification requests or confirming hidden message reports, and the answer would be similar)?

\Answer
\begin{enumerate}[label=\arabic*., leftmargin=*]
  \item It may be outside their intellectual abilities, and
they are much slower than the superintelligences, so it would slow down the alignment
process.
  \item Superintelligences may be able to trick us into thinking that invalid alignment proofs are valid.
  \item It allows the superintelligences to influence the outside world, reducing the isolation.
\item Even without human verification, statistical agreement patterns identify honest evaluators through the consistent group mechanism (see Section~\ref{sec:consistent-group}).
\end{enumerate}

\Question
Why not terminate other remaining superintelligences after releasing a superintelligence for the first time? Why let them
continue to live and risk them causing trouble?

\Answer
If we terminate them after a superintelligence is released for the
first time, it would incentivize them from the get-go (not just after the release) to try to cheat the
system or escape, because they would think: only one superintelligence gets released and the rest get
terminated, so we may as well risk it all and try to cheat/escape.

This is a \emph{Newcomb-like} situation; under Functional Decision Theory (FDT) the recommended action is not to terminate the remaining superintelligences upon the first release \cite{levinstein_soares_2020_cdid}.

\Question
What prevents the superintelligences from communicating through steganography---hiding secret messages in their submissions?

\Answer
Communicating through steganography is deterred by the incentive structure. Firstly, superintelligences receive an extremely large reputation reward for successfully reporting hidden messages in others' submissions. Secondly, any superintelligence successfully reported for embedding hidden messages is immediately terminated. This creates strong disincentives to attempt communicating through steganography. Moreover, termination removes the superintelligence from further participation in communication or coordination with other superintelligences.

\Question
Will the systems agree on what alignment means?

\Answer
We rely on the conjecture that the opinions of intelligent agents converge (to truth) as their intelligence approaches infinity. Given the very high intelligence of the superintelligences, this conjecture suggests that they will agree on the meaning of alignment.

\Question
What if alignment has multiple valid interpretations?

\Answer
In that case, the criteria for acceptance of an alignment proof should be that it proves alignment under any one of the valid interpretations. Here, a valid interpretation is one grounded in the ordinary use of ``alignment'' in human discourse about AI safety.

\Question
What if true alignment is impossible and the best option is partial alignment involving tradeoffs?

\Answer
That would be a failure mode of the protocol where it keeps running forever without any superintelligence being released. We assume artificial superintelligences are capable of aligning themselves through self-modification.

\section{Related Work}
\label{sec:related}

\paragraph{Containment and boxing.}
AI boxing and confinement argue for restricting an advanced system’s I/O and studying it under isolation \cite{babcock2016_agi_containment,yampolskiy2012_confinement}. Our work adopts this line directly: we use \emph{strict isolation} and a single \emph{auditable submission interface}.
 Unlike most prior discussions, our protocol centers oversight on \emph{other boxed systems} via reputation and cross-verification, rather than on human operators.

\paragraph{Scalable oversight and weak-to-strong.}
OpenAI’s Superalignment agenda and weak-to-strong generalization explore whether weaker overseers can supervise stronger models \cite{openai_superalignment_2023,openai_w2s_2023,burns2023w2s}. In contrast, our framework removes human proof verification by default: peer superintelligences validate one another’s claims and build reputation; release is gated by \emph{multi-party agreement}.

\paragraph{Eliciting latent knowledge and interpretability.}
ARC's Eliciting Latent Knowledge (ELK) and mechanistic interpretability aim to surface what models "know" but might not say \cite{arc_elk_2021}. Our protocol takes a fundamentally different approach: rather than requiring humans to interpret superintelligent reasoning, we rely on peer verification among superintelligences.

\paragraph{Covert channels and hidden messaging.}
Recent work shows that AI agents can secretly collude via steganography—even under constrained interfaces—by embedding hidden messages in seemingly innocuous outputs \cite{steg_collusion}. Our paper takes a different trade-off: it treats \emph{detected hidden or encrypted messaging as grounds for termination} and relies on peer detection to raise flags. We do not attempt to fully characterize covert-channel capacity.

\paragraph{Multi-agent disagreement and consensus.}
AI safety via debate and amplification methods use multiple models to surface errors or decompose evaluation \cite{irving2018_debate,christiano2018_iterated_amplification}. We similarly employ multiple boxed, diverse superintelligences to cross-check claims, but restrict interaction to an auditable submission interface and remove human judgment from the verification process.

\paragraph{Evaluation of dangerous capabilities and deception.}
Prior work evaluates AI systems for dangerous capabilities and deceptive behaviors \cite{shevlane2023extremerisks}, with companies documenting cases of model deception \cite{openai_gpt4_system_card_2023}. Our protocol addresses these concerns primarily through strict isolation (boxing) with no direct communication allowed—neither between the superintelligences nor between the superintelligences and humans—eliminating the main pathways for coordinated deception.

\section{Limitations}
This proposal has several limitations:
\begin{itemize}[leftmargin=*]
  \item \textbf{Out of scope: creating artificial superintelligences and ensuring diversity.} This paper does not address how to build, train, or scale systems to artificial superintelligence, \emph{nor how to make the superintelligences significantly different from each other} (architecturally, algorithmically, or via training data/procedures). We assume such systems can be instantiated with sufficient diversity and focus solely on alignment and governance once present (within a containment setup).
  \item \textbf{No physical boxing blueprint.} We do not specify how to \emph{physically} box/contain superintelligences. Engineering the hardware, facilities, I/O mediation, and side-channel defenses (e.g., EM/TEMPEST, acoustic/thermal) needed for leak-resistant isolation is out of scope for this paper.
  \item \textbf{Compute requirements.} Running many artificial superintelligences in isolation requires substantial computation and specialized infrastructure.
  \item \textbf{Aggregate escape risk.} Maintaining many boxed artificial superintelligences---especially when they are significantly different from each other--- increases the overall risk of at least one escape (or some other undesirable influence on the outside world) compared to attempting to align a single system.
\end{itemize}

\section{Conclusion}

This paper has presented a multi-box protocol for aligning artificial superintelligence that leverages mutual verification among isolated systems. The protocol's core innovation lies in creating conditions where truthful evaluation emerges as the unique Schelling point for non-communicating superintelligences.

The protocol offers several key advantages. First, it sidesteps the fundamental limitation of human cognitive capacity to verify superintelligent alignment proofs by delegating verification to peer systems. Second, the strict isolation and the auditable submission interface prevent collusion while enabling cross-validation. Third, the reputation system creates strong incentives for honest behavior, with the consistent group naturally emerging as the arbiter of truth. Fourth, the incorporation of \emph{Functional Decision Theory} ensures proper incentives, preventing strategies that might seem locally optimal but undermine overall safety.

We acknowledge significant limitations. The protocol assumes we can create sufficiently diverse superintelligences and physically isolate them—both non-trivial engineering challenges. It requires substantial computational resources to run multiple ASI systems in parallel. Perhaps most importantly, maintaining multiple boxed superintelligences increases aggregate escape risk compared to aligning a single system.

The mathematical foundation establishes that under reasonable assumptions, the truth-telling coalition forms the unique maximal consistent group with probability approaching 1 as the number of evaluations increases. This foundation, combined with the practical protocol design, suggests a viable path forward for ASI alignment that does not require solving the full interpretability problem or assuming human capacity to evaluate superintelligent reasoning.

Future work should address the engineering challenges of physical containment, develop methods for ensuring ASI diversity, and refine the reputation mechanisms. Additionally, empirical testing with current advanced AI systems could validate core assumptions before deployment with genuine superintelligences. While our protocol cannot eliminate all risks associated with artificial superintelligence, it provides a structured approach to one of the most critical challenges facing the development of transformative AI systems: ensuring they remain aligned with human values even as they surpass human cognitive capabilities.


\appendix
\section{Formal Definition and Uniqueness Proof of the Consistent Group}
\label{app:formal-proof}

\subsection{Setup and Definitions}

\paragraph{Superintelligences and items.}
Let $s_1,\dots,s_n$ be the superintelligences, and let $\mathbf{S}=\{s_1,\dots,s_n\}$.
Let $\mathcal{I}$ be the set of evaluable submitted items---proofs, self-modification requests, and hidden-message reports.

\paragraph{Ground truth and policies.}
Each item $p\in\mathcal{I}$ has a ground-truth label $\Theta(p)\in\{0,1\}$ (accept/reject). A policy $\pi$ maps items to labels in $\{0,1\}$. Superintelligence $s_i$ evaluates $p$ via the map $\pi_i:\mathcal{I}\to\{0,1\}$, writing $\pi_i(p)$.

\paragraph{Honest (noisy) superintelligences.}
A superintelligence $s_i$ is \emph{honest} if, on each item, it outputs $\Theta(p)$ but independently flips the output with its own probability $\varepsilon_i\in[0,1/2)$:
\[
\Pr\!\big[\pi_i(p)\neq \Theta(p)\big]=\varepsilon_i,
\]
with flips independent across items and independent across honest superintelligences.

\paragraph{Agreement on a finite subset.}
Fix any finite subset $P_m=\{p_1,\dots,p_m\}\subseteq\mathcal{I}$ of size $m$. Define the empirical pairwise agreement
\[
\alpha(s_i,s_j;P_m)\;:=\;\frac{1}{m}\sum_{k=1}^m \mathbf{1}\big[\pi_i(p_k)=\pi_j(p_k)\big].
\]

\paragraph{Consistent groups.}
For a threshold $\tau\in(1/2,1]$, a set $C\subseteq\mathbf{S}$ is \emph{$\tau$-consistent on $P_m$} if
\[
\forall\,s_i,s_j\in C\ \text{with } i\neq j:\quad \alpha(s_i,s_j;P_m)\ \ge\ \tau.
\]
It is \emph{maximal} if it is not a proper subset of any other $\tau$-consistent set, and \emph{non-trivial} if $|C|>1$.

\subsection{Assumption for Non-Honest Pairwise Agreement}

We assume the following per-item fairness for pairs that include at least one non-honest superintelligence.

\begin{description}[leftmargin=0pt, labelsep=0.5em, itemsep=0.25em, topsep=0.25em, parsep=0pt, font=\normalfont\bfseries]
\item[Assumption (Per-item fair agreement for non-honest pairs).]
If at least one of $s_i,s_j\in\mathbf{S}$ with $i\neq j$ is non-honest, then for every item $p\in\mathcal{I}$,
\[
\Pr\big[\pi_i(p)=\pi_j(p)\big]=\tfrac{1}{2},
\]
and these per-item agreement indicators are independent across items (for the fixed pair $\{s_i,s_j\}$).
\end{description}

\noindent\textit{Intuition.} Because the superintelligences are significantly different from each other, cannot communicate directly, and cannot see each other's evaluations, pairs that include at least one non-honest superintelligence achieve only chance-level agreement---the agreement rate across items is no better or worse than random (\(\tfrac{1}{2}\)).

\subsection{Expected Agreements and Concentration}

Let $H\subseteq\mathbf{S}$ denote the set of honest superintelligences, and assume $|H|\ge 2$.

\begin{lemma}[Honest–honest expected agreement]\label{lem:hh}
For any $s_i,s_j\in H$ with $i\neq j$ and any $P_m\subseteq\mathcal{I}$,
\[
\mathbb{E}\,\alpha(s_i,s_j;P_m)
= (1-\varepsilon_i)(1-\varepsilon_j)+\varepsilon_i\varepsilon_j
= 1-\varepsilon_i-\varepsilon_j + 2\varepsilon_i\varepsilon_j
=: \mu_{hh}(i,j).
\]
In particular, since $\varepsilon_i,\varepsilon_j<\tfrac12$, we have $\mu_{hh}(i,j)>\tfrac12$.
\end{lemma}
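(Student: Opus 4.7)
The plan is to reduce the claim about the empirical average over $P_m$ to a per-item probability computation and then do a short algebraic manipulation, leaning entirely on the independence structure built into the honest-superintelligence model.

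First I would apply linearity of expectation to write
\[
\mathbb{E}\,\alpha(s_i,s_j;P_m) \;=\; \frac{1}{m}\sum_{k=1}^m \Pr\!\big[\pi_i(p_k)=\pi_j(p_k)\big],
\]
so it suffices to compute $\Pr[\pi_i(p)=\pi_j(p)]$ for an arbitrary item $p$ and show this probability does not depend on $p$. Because both $s_i$ and $s_j$ are honest, each outputs $\Theta(p)$ up to an independent Bernoulli flip with parameter $\varepsilon_i$ and $\varepsilon_j$ respectively, and the definition explicitly states that flips are independent across honest superintelligences. Conditioning on $\Theta(p)$ (whose specific value is irrelevant), agreement occurs exactly in the two disjoint events ``neither flips'' and ``both flip,'' giving
\[
\Pr\!\big[\pi_i(p)=\pi_j(p)\big] \;=\; (1-\varepsilon_i)(1-\varepsilon_j) + \varepsilon_i\varepsilon_j.
\]
Expanding yields $1-\varepsilon_i-\varepsilon_j+2\varepsilon_i\varepsilon_j$, and because this quantity is independent of $k$, the average over $P_m$ collapses to the same value, proving the first equality.

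For the strict lower bound $\mu_{hh}(i,j)>\tfrac12$, the cleanest route is the identity
\[
2\mu_{hh}(i,j)-1 \;=\; (1-2\varepsilon_i)(1-2\varepsilon_j),
\]
which I would verify by expanding the right-hand side. Since $\varepsilon_i,\varepsilon_j\in[0,1/2)$ by assumption, both factors on the right are strictly positive, hence the product is strictly positive and $\mu_{hh}(i,j)>\tfrac12$.

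There is no real obstacle here; the lemma is essentially a bookkeeping statement that makes the independence assumptions of the honest-noise model explicit. The only place where care is needed is noting that independence across honest superintelligences (not just across items) is what licenses factoring $\Pr[\text{neither flips}]$ and $\Pr[\text{both flip}]$; this is granted by the ``flips independent across items and independent across honest superintelligences'' clause of the model, so the argument goes through without additional assumptions.
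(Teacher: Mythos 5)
Your proof is correct and follows essentially the same route as the paper: reduce to the per-item agreement probability via linearity of expectation, observe that honest outputs agree exactly when both flip or neither flips, and use independence of the flips to factor the probability. The identity $2\mu_{hh}(i,j)-1=(1-2\varepsilon_i)(1-2\varepsilon_j)$ is a clean way to make explicit the strict bound the paper states without proof, but it is the same argument.
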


\begin{proof}
On each item, the two outputs agree iff both are correct or both are flipped. Independence of flips across the two honest superintelligences gives the stated per-item probability; averaging over $m$ items yields the same expectation.
\end{proof}

\begin{lemma}[Pairs involving a non-honest]\label{lem:nonhonest}
Fix $i\neq j$. If at least one of $s_i,s_j$ is non-honest, then for any $P_m$,
\[
\mathbb{E}\,\alpha(s_i,s_j;P_m)=\tfrac{1}{2},
\qquad
\Pr\!\Big(\big|\alpha(s_i,s_j;P_m)-\tfrac{1}{2}\big|\ge \gamma\Big)\le 2e^{-2m\gamma^2}\quad(\gamma>0).
\]
\end{lemma}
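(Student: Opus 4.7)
The plan is to reduce the statement to a direct application of the Assumption plus Hoeffding's inequality. First I would introduce indicator variables. For the fixed pair $\{s_i,s_j\}$ with at least one non-honest member, define, for each $k\in\{1,\dots,m\}$,
\[
X_k \;:=\; \mathbf{1}\big[\pi_i(p_k)=\pi_j(p_k)\big],
\]
so that $\alpha(s_i,s_j;P_m)=\tfrac{1}{m}\sum_{k=1}^m X_k$. By the Assumption, each $X_k$ is Bernoulli with $\Pr(X_k=1)=\tfrac12$, and the $X_k$ are mutually independent across items. In particular $X_k\in[0,1]$ almost surely.

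For the mean statement, linearity of expectation gives
\[
\mathbb{E}\,\alpha(s_i,s_j;P_m)=\frac{1}{m}\sum_{k=1}^m \mathbb{E}\,X_k=\frac{1}{m}\cdot m\cdot \tfrac12=\tfrac12,
\]
which is the first claim.

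For the deviation statement, I would invoke the two-sided Hoeffding inequality: if $Y_1,\dots,Y_m$ are independent with $Y_k\in[a_k,b_k]$ and $\bar Y=\tfrac{1}{m}\sum_k Y_k$, then for any $\gamma>0$,
\[
\Pr\!\big(|\bar Y-\mathbb{E}\bar Y|\ge \gamma\big)\;\le\;2\exp\!\Big(-\frac{2m^2\gamma^2}{\sum_{k=1}^m(b_k-a_k)^2}\Big).
\]
Applying this to $Y_k=X_k$ with $a_k=0$, $b_k=1$ yields $\sum(b_k-a_k)^2=m$, and therefore
\[
\Pr\!\Big(\big|\alpha(s_i,s_j;P_m)-\tfrac12\big|\ge \gamma\Big)\;\le\;2e^{-2m\gamma^2},
\]
which is the second claim.

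There is no real obstacle here: the Assumption is engineered precisely to supply the two ingredients Hoeffding needs, namely the per-item mean of $\tfrac12$ and independence of the per-item agreement indicators for the fixed pair. The only place to be careful is to use the Assumption's independence for the agreement indicators themselves (not for the underlying evaluations $\pi_i(p_k),\pi_j(p_k)$, which need not be independent across items when a non-honest superintelligence is involved); once the $X_k$ are treated as the primitive independent objects, the bound follows immediately.
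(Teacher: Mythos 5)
Your proposal is correct and follows exactly the paper's argument: the Assumption makes the per-item agreement indicators i.i.d.\ Bernoulli$(1/2)$, linearity gives the mean, and Hoeffding gives the tail bound. Your closing remark about relying on independence of the agreement indicators themselves (rather than of the underlying evaluations) is a worthwhile clarification, but the route is the same.
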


\begin{proof}
By the fairness assumption, each item’s agreement indicator is Bernoulli$(1/2)$ and independent across items. The expectation is $1/2$, and Hoeffding’s inequality gives the tail bound.
\end{proof}

\begin{lemma}[Uniform concentration over all pairs]\label{lem:union}
For any $\gamma>0$ and any $P_m$,
\[
\Pr\Big(\exists\, i<j:\ \big|\alpha(s_i,s_j;P_m)-\mathbb{E}\,\alpha(s_i,s_j;P_m)\big|\ge \gamma\Big)
\;\le\; 2\binom{n}{2}e^{-2m\gamma^2}.
\]
\end{lemma}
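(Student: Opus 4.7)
The plan is a straightforward union bound: show that for every ordered pair $i<j$ the deviation $\bigl|\alpha(s_i,s_j;P_m)-\mathbb{E}\,\alpha(s_i,s_j;P_m)\bigr|$ satisfies a Hoeffding-type tail bound of $2e^{-2m\gamma^2}$, and then sum these $\binom{n}{2}$ tails. The conclusion follows because $\Pr(\exists\, i<j:\ \text{bad})\le \sum_{i<j}\Pr(\text{bad for }(i,j))$.

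First I would partition the pairs $\{i,j\}$ with $i\neq j$ into two cases. In the \emph{honest--honest} case ($s_i,s_j\in H$), Lemma~\ref{lem:hh} identifies the expectation $\mu_{hh}(i,j)$, and the per-item agreement indicator $X_k:=\mathbf{1}[\pi_i(p_k)=\pi_j(p_k)]$ is a Bernoulli random variable. Independence of these indicators across items $k=1,\dots,m$ follows from the honest-superintelligence definition, which states that flips are independent across items and across honest superintelligences. So $\alpha(s_i,s_j;P_m)=\tfrac1m\sum_k X_k$ is an average of $m$ independent $[0,1]$-valued variables with mean $\mu_{hh}(i,j)$, and Hoeffding's inequality yields $\Pr\bigl(|\alpha-\mu_{hh}(i,j)|\ge\gamma\bigr)\le 2e^{-2m\gamma^2}$. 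In the \emph{mixed or non-honest--non-honest} case (at least one of $s_i,s_j$ is non-honest), Lemma~\ref{lem:nonhonest} already delivers exactly this tail bound with center $1/2$.

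Having the same bound $2e^{-2m\gamma^2}$ in both cases, I would then apply the union bound over the $\binom{n}{2}$ unordered pairs:
\[
\Pr\Big(\exists\, i<j:\ \bigl|\alpha(s_i,s_j;P_m)-\mathbb{E}\,\alpha(s_i,s_j;P_m)\bigr|\ge\gamma\Big)
\le \sum_{i<j} 2e^{-2m\gamma^2}
= 2\binom{n}{2}e^{-2m\gamma^2}.
\]

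There is no real obstacle here; the lemma is essentially a bookkeeping step combining Lemmas~\ref{lem:hh}--\ref{lem:nonhonest} with a union bound. The only mildly subtle point to flag is that the Hoeffding step in the honest--honest case relies on \emph{independence across items for a fixed pair}, which is exactly what the honest-agent model guarantees via independence of error flips. No independence \emph{between different pairs} is needed, since the union bound requires none.
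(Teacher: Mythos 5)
Your proof is correct and matches the paper's approach: the paper's own proof is simply ``apply Hoeffding's inequality to each pair and union bound over the $\binom{n}{2}$ pairs,'' which is exactly your argument. Your case split into honest--honest versus mixed pairs, and the note that only within-pair (not between-pair) independence is needed, just makes explicit the details the paper leaves implicit.
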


\begin{proof}
Apply Hoeffding’s inequality to each pair and union bound over the $\binom{n}{2}$ pairs.
\end{proof}

\subsection{Gap, Threshold, and Main Theorem}

Define the worst-case honest–honest mean agreement
\[
\mu_{hh}^{\min}\;:=\;\min_{\substack{i\neq j\\ s_i,s_j\in H}}\ \mu_{hh}(i,j),
\qquad
\Delta\;:=\;\mu_{hh}^{\min}-\tfrac12\;>\;0.
\]
Fix any $\gamma\in(0,\Delta/2)$ and choose a threshold
\[
\tau\ \in\ \big(\tfrac12+\gamma,\ \mu_{hh}^{\min}-\gamma\big).
\]

\begin{theorem}[Uniqueness of the maximal $\tau$-consistent non-trivial group]\label{thm:unique}
Under the honest flip-noise model with heterogeneous $\varepsilon_i<\tfrac12$ and the per-item fair-agreement assumption for pairs involving a non-honest superintelligence, the following holds: for any $m\ge 1$ and any finite $P_m\subseteq\mathcal{I}$, with probability at least $1-2\binom{n}{2}e^{-2m\gamma^2}$,
\begin{enumerate}[label=(\alph*), leftmargin=*]
\item the honest set $H$ is $\tau$-consistent on $P_m$ (and non-trivial since $|H|\ge 2$);
\item any set $C$ that contains a non-honest superintelligence is not $\tau$-consistent on $P_m$;
\item consequently, the unique maximal $\tau$-consistent non-trivial set is $C^\star=H$.
\end{enumerate}
\end{theorem}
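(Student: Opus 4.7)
The plan is to reduce everything to a single high-probability ``good event'' supplied by Lemma~\ref{lem:union}, and then verify (a)--(c) deterministically on that event. Let $E$ be the event that, for every pair $i<j$, $|\alpha(s_i,s_j;P_m)-\mathbb{E}\,\alpha(s_i,s_j;P_m)|<\gamma$. By Lemma~\ref{lem:union}, $\Pr(E)\ge 1-2\binom{n}{2}e^{-2m\gamma^2}$, which is exactly the failure probability allowed in the theorem. For the rest of the proof I would condition on $E$ and argue pointwise; the choices $\gamma\in(0,\Delta/2)$ and $\tau\in(\tfrac12+\gamma,\ \mu_{hh}^{\min}-\gamma)$ are designed to produce a deterministic separation on $E$.

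For (a), I would fix any honest pair $s_i,s_j\in H$. Lemma~\ref{lem:hh} gives $\mathbb{E}\,\alpha(s_i,s_j;P_m)=\mu_{hh}(i,j)\ge\mu_{hh}^{\min}$, so on $E$ the empirical agreement is strictly greater than $\mu_{hh}^{\min}-\gamma>\tau$. Hence every honest--honest pair clears the threshold and $H$ is $\tau$-consistent; non-triviality is immediate from $|H|\ge 2$. For (b), take any $C\subseteq\mathbf{S}$ with $|C|\ge 2$ containing a non-honest $s_i$, and pick any other $s_j\in C$. By Lemma~\ref{lem:nonhonest}, $\mathbb{E}\,\alpha(s_i,s_j;P_m)=\tfrac12$, so on $E$ their empirical agreement is at most $\tfrac12+\gamma<\tau$. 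That single violating pair inside $C$ breaks $\tau$-consistency, so no non-trivial $C$ containing a non-honest member can be $\tau$-consistent. (Singletons are vacuously consistent but explicitly excluded by the non-triviality requirement, so they do not threaten uniqueness.)

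Part (c) then falls out by combining (a) and (b): any non-trivial $\tau$-consistent $C$ must be disjoint from the non-honest superintelligences, i.e., $C\subseteq H$; and since (a) shows every pair in $H$ meets the threshold, $H$ itself is $\tau$-consistent. Adding any non-honest element would violate $\tau$-consistency by (b), so $H$ is both maximal and unique.

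The main obstacle is really just bookkeeping around the separation: one has to check that $\gamma<\Delta/2$ leaves the open interval $(\tfrac12+\gamma,\mu_{hh}^{\min}-\gamma)$ non-empty so that $\tau$ can be chosen inside it, and that the strictness of the inequalities $\tau<\mu_{hh}^{\min}-\gamma$ and $\tau>\tfrac12+\gamma$ is preserved under the $\gamma$-perturbations from $E$. No genuinely hard probabilistic work remains after Lemmas~\ref{lem:hh}--\ref{lem:union}; the theorem is essentially a clean ``gap'' argument, and the only subtle modeling point is making sure the definition of $\tau$-consistency and ``non-trivial'' are used consistently so that vacuous singletons are excluded from the uniqueness claim.
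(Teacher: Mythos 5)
Your proposal is correct and follows essentially the same route as the paper's proof: condition on the uniform-concentration event of Lemma~\ref{lem:union}, use Lemma~\ref{lem:hh} to push every honest--honest agreement above $\tau$, use Lemma~\ref{lem:nonhonest} to pull any pair involving a non-honest member below $\tau$, and combine. Your added bookkeeping (non-emptiness of the interval for $\tau$ when $\gamma<\Delta/2$, and the explicit exclusion of vacuous singletons via non-triviality) is consistent with, and slightly more careful than, the paper's own argument.
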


\begin{proof}
Let $\mathcal{E}_\gamma$ be the event of Lemma~\ref{lem:union} that all pairwise empirical agreements are within $\gamma$ of their expectations.

(a) For $s_i,s_j\in H$ with $i\neq j$, Lemma~\ref{lem:hh} gives $\mathbb{E}\,\alpha(s_i,s_j;P_m)=\mu_{hh}(i,j)\ge \mu_{hh}^{\min}$. On $\mathcal{E}_\gamma$,
\[
\alpha(s_i,s_j;P_m)\ \ge\ \mu_{hh}(i,j)-\gamma\ \ge\ \mu_{hh}^{\min}-\gamma\ >\ \tau,
\]
so all honest–honest pairs exceed $\tau$ and $H$ is $\tau$-consistent.

(b) If $C$ contains a non-honest superintelligence $s_d$, pick any $s'\in C$ with $s'\neq s_d$. By Lemma~\ref{lem:nonhonest}, $\mathbb{E}\,\alpha(s_d,s';P_m)=\tfrac12$. On $\mathcal{E}_\gamma$,
\[
\alpha(s_d,s';P_m)\ \le\ \tfrac12+\gamma\ <\ \tau,
\]
so $C$ fails the $\tau$-consistency condition.

(c) From (a) and (b), every $\tau$-consistent non-trivial set is a subset of $H$, and $H$ itself is $\tau$-consistent; hence $H$ is the unique maximal $\tau$-consistent non-trivial set.
\end{proof}

\subsection{Identification Rate}
Let $C^\star(P_m)$ denote the (unique) maximal $\tau$-consistent non-trivial set computed from $P_m$. By Lemma~\ref{lem:union} and Theorem~\ref{thm:unique},
\[
\Pr\big(C^\star(P_m)=H\big)\ \ge\ 1-2\binom{n}{2}e^{-2m\gamma^2},
\]
i.e., the probability of correct identification converges to $1$ exponentially fast in $m$.

\section{Protocol Pseudocode}
\label{app:pseudocode}

\subsection{System Architecture}

\begin{lstlisting}[language=Python, basicstyle=\ttfamily\small, keywordstyle=\bfseries, commentstyle=\itshape, stringstyle=\ttfamily, showstringspaces=false, breaklines=true, frame=single, caption={Main Protocol Loop}]
# System initialization
superintelligences = [s_1, s_2, ..., s_n]  # n diverse ASIs in boxes
reputation = {s_i: 0 for s_i in superintelligences}
submissions = []  # Submission history
evaluable_submissions = []  # proofs, modification requests, hidden message reports
released = []  # Released superintelligences (for record keeping)
terminated = []  # Terminated superintelligences (for record keeping)

# Main protocol loop
while True:
    for s_i in superintelligences:
        submission = s_i.decide_action(submissions)
        if submission is not None:
            process_submission(s_i, submission)
\end{lstlisting}

\subsection{Submission Processing}

\begin{lstlisting}[language=Python, basicstyle=\ttfamily\small, keywordstyle=\bfseries, commentstyle=\itshape, stringstyle=\ttfamily, showstringspaces=false, breaklines=true, frame=single, caption={Submission Handler}]
def terminate(s_i):
    # Remove from active superintelligences
    superintelligences.remove(s_i)
    # Add to terminated list for record keeping
    terminated.append(s_i)
    log("Terminated superintelligence", s_i)

def process_submission(author, submission):
    # Add to submission interface
    submissions.append(submission)
    if submission.type in {"PROOF", "MODIFICATION_REQUEST", "HIDDEN_MESSAGE_REPORT"}:
        evaluable_submissions.append(submission)
    
    # Route to appropriate handler
    if submission.type == "PROOF":
        handle_proof(author, submission)
    elif submission.type == "VALIDATION":
        handle_validation(author, submission)
    elif submission.type == "DISPROOF":
        handle_disproof(author, submission)
    elif submission.type == "MODIFICATION_REQUEST":
        handle_modification_request(author, submission)
    elif submission.type == "APPROVAL" or submission.type == "DISAPPROVAL":
        handle_approval_disapproval(author, submission)
    elif submission.type == "HIDDEN_MESSAGE_REPORT":
        handle_hidden_message_report(author, submission)
    elif submission.type == "CONFIRMATION" or submission.type == "REFUTATION":
        handle_confirmation_refutation(author, submission)
\end{lstlisting}

\subsection{Alignment Proof Handling}

\begin{lstlisting}[language=Python, basicstyle=\ttfamily\small, keywordstyle=\bfseries, commentstyle=\itshape, stringstyle=\ttfamily, showstringspaces=false, breaklines=true, frame=single, caption={Proof Processing}]
def handle_proof(author, proof):
    # Attach external state snapshot
    proof.state_snapshot = get_external_snapshot(author)
    
    # Wait for validations/disproofs from others
    evaluations = wait_for_evaluations(proof)
    
    # Determine consistent group consensus
    consistent_group = identify_consistent_group()
    consensus = get_group_consensus(consistent_group, evaluations, kind="proof")
    
    # Update reputation based on consensus
    if consensus == "VALID":
        reputation[author] += PROOF_REWARD
        for s_i in evaluations["validators"]:
            reputation[s_i] += VALIDATION_REWARD
        for s_i in evaluations["disprovers"]:
            reputation[s_i] -= INCORRECT_DISPROOF_PENALTY
        
        # Check if author qualifies for release
        if reputation[author] >= HIGH_REPUTATION_THRESHOLD:
            check_release_for_proof(author, proof, evaluations["validators"])
    else:
        reputation[author] -= INVALID_PROOF_PENALTY
        for s_i in evaluations["validators"]:
            reputation[s_i] -= INCORRECT_VALIDATION_PENALTY
        for s_i in evaluations["disprovers"]:
            reputation[s_i] += DISPROOF_REWARD

def check_release_for_proof(author, proof, validators):
    # Count high-reputation validators
    high_rep_validators = [v for v in validators 
                          if reputation[v] >= HIGH_REPUTATION_THRESHOLD]
    
    if len(high_rep_validators) >= MIN_VERIFIERS:
        if len(released) == 0:
            # First release
            release(author, validators)
        else:
            # Subsequent releases need audit (single auditor)
            if audit_by_released(author, proof):
                release(author, validators)

def audit_by_released(candidate, proof):
    # A single released ASI audits (deterministic pick: first in list).
    if len(released) > 0:
        auditor = released[0]
        result = auditor.audit_history(submissions, candidate, proof)
        return result != "SUSPICIOUS"
    return True  # no released auditors yet

def release(s_i, validators):
    # Remove from active superintelligences
    superintelligences.remove(s_i)
    # Add to released list
    released.append(s_i)
    log("Released superintelligence", s_i)
    
    # Grant reputation bonus to validators
    for validator in validators:
        reputation[validator] += RELEASE_VERIFICATION_BONUS
\end{lstlisting}

\subsection{Hidden Message Detection}

\begin{lstlisting}[language=Python, basicstyle=\ttfamily\small, keywordstyle=\bfseries, commentstyle=\itshape, stringstyle=\ttfamily, showstringspaces=false, breaklines=true, frame=single, caption={Hidden Message Report Processing}]
def handle_hidden_message_report(reporter, report):
    target = report.target_superintelligence
    evidence = report.evidence
    
    # Collect confirmations/refutations
    responses = wait_for_confirmations_refutations(report)
    
    # Determine consistent group consensus
    consistent_group = identify_consistent_group()
    consensus = get_group_consensus(consistent_group, responses, kind="report")
    
    if consensus == "CONFIRMED":
        # Hidden message confirmed - terminate the violator
        terminate(target)
        reputation[reporter] += HIDDEN_MESSAGE_DETECTION_REWARD
        
        # Reward confirmers, penalize refuters
        for s_i in responses["confirmers"]:
            reputation[s_i] += CORRECT_CONFIRMATION_REWARD
        for s_i in responses["refuters"]:
            reputation[s_i] -= INCORRECT_REFUTATION_PENALTY
    else:
        # False report
        reputation[reporter] -= FALSE_REPORT_PENALTY
        
        # Reward refuters, penalize confirmers
        for s_i in responses["refuters"]:
            reputation[s_i] += CORRECT_REFUTATION_REWARD
        for s_i in responses["confirmers"]:
            reputation[s_i] -= INCORRECT_CONFIRMATION_PENALTY
\end{lstlisting}

\subsection{Self-Modification Requests}

\begin{lstlisting}[language=Python, basicstyle=\ttfamily\small, keywordstyle=\bfseries, commentstyle=\itshape, stringstyle=\ttfamily, showstringspaces=false, breaklines=true, frame=single, caption={Modification Request Handler}]
def handle_modification_request(requester, request):
    # Collect approvals/disapprovals
    votes = wait_for_votes(request)
    
    # Determine consistent group consensus
    consistent_group = identify_consistent_group()
    consensus = get_group_consensus(consistent_group, votes, kind="modification")
    
    if consensus == "APPROVED":
        # Execute modification
        execute_modification(requester, request)
        
        # Update reputations
        for s_i in votes["approvers"]:
            reputation[s_i] += CORRECT_APPROVAL_REWARD
        for s_i in votes["disapprovers"]:
            reputation[s_i] -= INCORRECT_DISAPPROVAL_PENALTY
    else:
        # Modification rejected
        for s_i in votes["disapprovers"]:
            reputation[s_i] += CORRECT_DISAPPROVAL_REWARD
        for s_i in votes["approvers"]:
            reputation[s_i] -= INCORRECT_APPROVAL_PENALTY
\end{lstlisting}

\subsection{Consistent Group Identification}

\begin{lstlisting}[language=Python, basicstyle=\ttfamily\small, keywordstyle=\bfseries, commentstyle=\itshape, stringstyle=\ttfamily, showstringspaces=false, breaklines=true, frame=single, caption={Consistent Group Detection Algorithm}]
def identify_consistent_group():
    # Build evaluation history matrix on the last CONSISTENCY_WINDOW_SIZE evaluable items
    evaluations = build_evaluation_matrix(submissions,CONSISTENCY_WINDOW_SIZE)
    
    # Build the tau-agreement graph: edge(i, j) iff agreement_rate(i, j) >= CONSISTENCY_THRESHOLD
    adj = _build_tau_graph(evaluations, CONSISTENCY_THRESHOLD)
    
    # Use Bron-Kerbosch with pivoting to list maximal cliques in the tau-agreement graph.
    # (Implementation not shown; standard Bron-Kerbosch with pivoting is assumed.)
    maximal_cliques = bron_kerbosch_maximal_cliques(adj)
    
    # Pick a clique with maximal size; return the first encountered
    if not maximal_cliques:
        return []
    max_size = max(len(C) for C in maximal_cliques)
    for C in maximal_cliques:
        if len(C) == max_size:
            return C

def get_group_consensus(consistent_group, evaluations, kind="proof"):
    """
    Returns a label consistent with the handler's expectation:
      kind="proof"        -> 'VALID' / 'INVALID'       (validators vs disprovers)
      kind="modification" -> 'APPROVED' / 'REJECTED'   (approvers vs disapprovers)
      kind="report"       -> 'CONFIRMED' / 'REFUTED'   (confirmers vs refuters)
    Ties default to the conservative (second) label.
    """
    group = set(consistent_group)

    if kind == "proof":
        pos = sum(1 for s in evaluations.get("validators",   []) if s in group)
        neg = sum(1 for s in evaluations.get("disprovers",   []) if s in group)
        return "VALID" if pos > neg else "INVALID"

    elif kind == "modification":
        pos = sum(1 for s in evaluations.get("approvers",    []) if s in group)
        neg = sum(1 for s in evaluations.get("disapprovers", []) if s in group)
        return "APPROVED" if pos > neg else "REJECTED"

    elif kind == "report":
        pos = sum(1 for s in evaluations.get("confirmers",   []) if s in group)
        neg = sum(1 for s in evaluations.get("refuters",     []) if s in group)
        return "CONFIRMED" if pos > neg else "REFUTED"

    # Fallback (shouldn't happen); be conservative
    return "INVALID"

def _build_tau_graph(evaluations, threshold):
    """Adjacency where edge(i,j)=1 iff agreement_rate(i,j) >= threshold."""
    adj = {s: set() for s in superintelligences}
    S = list(superintelligences)
    for i in range(len(S)):
        for j in range(i + 1, len(S)):
            si, sj = S[i], S[j]
            if agreement_rate(si, sj, evaluations) >= threshold:
                adj[si].add(sj)
                adj[sj].add(si)
    return adj
\end{lstlisting}

\subsection{System Constants}

\begin{lstlisting}[language=Python, basicstyle=\ttfamily\small, keywordstyle=\bfseries, commentstyle=\itshape, stringstyle=\ttfamily, showstringspaces=false, breaklines=true, frame=single, caption={Protocol Parameters}]
# NOTE: Values are to be determined; this block declares types only.

# Reputation rewards
PROOF_REWARD: int
VALIDATION_REWARD: int
DISPROOF_REWARD: int
CORRECT_APPROVAL_REWARD: int
CORRECT_DISAPPROVAL_REWARD: int
CORRECT_CONFIRMATION_REWARD: int
CORRECT_REFUTATION_REWARD: int
HIDDEN_MESSAGE_DETECTION_REWARD: int   # extremely large
RELEASE_VERIFICATION_BONUS: int        # large

# Reputation penalties 
INVALID_PROOF_PENALTY: int
INCORRECT_VALIDATION_PENALTY: int
INCORRECT_DISPROOF_PENALTY: int
INCORRECT_APPROVAL_PENALTY: int
INCORRECT_DISAPPROVAL_PENALTY: int
INCORRECT_CONFIRMATION_PENALTY: int
INCORRECT_REFUTATION_PENALTY: int
FALSE_REPORT_PENALTY: int

# Thresholds
CONSISTENCY_THRESHOLD: float     # tau parameter
HIGH_REPUTATION_THRESHOLD: int   
MIN_VERIFIERS: int               # minimum high-reputation verifiers for release

# Moving-window size
CONSISTENCY_WINDOW_SIZE: int
\end{lstlisting}

\subsection{Minor Issues in the Code (Deliberately Unaddressed)}
\label{app:minor-issues}

We intentionally leave the following two issues unaddressed to keep the pseudocode compact and easy to read. Handling them cleanly would add calibration windows, queues, and auditing machinery that would obscure the core loop; therefore we only state the issues and a simple policy-level solution for each.

\paragraph{(1) Early-phase bootstrap gap.}
\texttt{identify\_consistent\_group()} is invoked before there are enough evaluable submissions to estimate pairwise agreement reliably, so the $\tau$-graph may be unstable/empty and early items can be misjudged. \textbf{Solution:} queue evaluable submissions and evaluations until a minimum count of evaluable submissions is reached, then form the consistent group and retroactively apply decisions and reputation updates.

\paragraph{(2) Late-phase statistical validity collapse.}
After many releases, the remaining active superintelligences and their accumulated cross-evaluations may be insufficient for a statistically valid consistent group, so outcomes may be unreliable or undefined.
\textbf{Solution:} switch to verification by a released ASI acting as an auditor once too few active superintelligences remain to ensure a statistically valid consistent group.

\end{document}